\newcommand{\ba}{\begin{array}}
\newcommand{\ea}{\end{array}}
\newcommand{\bit}{\begin{itemize}}
\newcommand{\eit}{\end{itemize}}
\newcommand{\be}{\begin{equation}}
\newcommand{\ee}{\end{equation}}
\newcommand{\bea}{\begin{eqnarray}}
\newcommand{\eea}{\end{eqnarray}}
\newcommand{\Rmn}[1]{\uppercase\expandafter{\romannumeral#1}}
\newcommand{\R}{\mathbb{R}}
\newcommand{\iprod}[2]{\left \langle #1, #2 \right \rangle }
\begin{document}

\title{AdaFish: Fast low-rank parameter-efficient fine-tuning \\
by using second-order information}

\author{\name Jiang Hu \email hujiangopt@gmail.com \\
       \addr Massachusetts General Hospital and Harvard Medical School\\
       Boston, MA 02114, USA
       \AND
       \name Quanzheng Li \email li.quanzheng@mgh.harvard.edu \\
       \addr Massachusetts General Hospital and Harvard Medical School\\
       Boston, MA 02114, USA
       }

\editor{xx}

\maketitle

\begin{abstract}
Recent advancements in large-scale pretrained models have significantly improved performance across a variety of tasks in natural language processing and computer vision. However, the extensive number of parameters in these models necessitates substantial memory and computational resources for full training. To adapt these models for downstream tasks or specific application-oriented datasets, parameter-efficient fine-tuning methods leveraging pretrained parameters have gained considerable attention. However, it can still be time-consuming due to lots of parameters and epochs. In this work, we introduce AdaFish, an efficient algorithm of the second-order type designed to expedite the training process within low-rank decomposition-based fine-tuning frameworks. Our key observation is that the associated generalized Fisher information matrix is either low-rank or extremely small-scaled. Such a generalized Fisher information matrix is shown to be equivalent to the Hessian matrix. Moreover, we prove the global convergence of AdaFish, along with its iteration/oracle complexity. Numerical experiments show that our algorithm is quite competitive with the state-of-the-art AdamW method. 
\end{abstract}

\section{Introduction} 
Large-scale models have achieved significant success in diverse applications spanning computer vision and natural language processing. The efficacy of these models is largely attributed to their intelligently designed network architectures and the extensive number of parameters, enhancing their representational capabilities. When tailoring these models to specific downstream tasks using domain-specific datasets, relying solely on pretrained weights may not suffice, and fine-tuning is often employed to boost performance. However, fine tuning all the weights still requires huge resources of the storage and the computing. This challenge has spurred research into parameter-efficient fine-tuning, a strategy aimed at reducing the number of weights to be adjusted by confining training to lower-dimensional subspaces rather than the entire parameter space. The low-rank adaptation (LoRA) in \citep{hu2021lora}  assumes the weight update in the fine-tuning stage is low-rank and directly performs the update on the pretrained weight matrices in low-rank subspaces. By reorganizing the weight matrices into a single third-order tensor, FacT \citep{jie2023fact} utilizes the tensor low-rank Tucker decomposition and tensor train decomposition to further reduce the number of parameters and yield superior performance over LoRA. The tensor CANDECOMP/PARAFAC (CP) decomposition \citep{kolda2009tensor} is further investigated in \citep{wang2023parameter} to establish a large-scale multimodal foundation model.

Among the training strategies of the parameter-efficient fine-tuning frameworks, the most popular one is the AdamW method \citep{loshchilov2017decoupled}, which introduces an additional weight decay on the well-known Adam optimizer \citep{kingma2014adam}. Moving beyond first-order information, some studies have focused on leveraging approximate Hessian information for optimization. AdaHessian \citep{yao2021adahessian} utilizes a Hutchinson-based method to approximate the diagonal part of the Hessian. Subsequently, this idea was generalized in \citep{liu2023sophia}, where a more computable and high-quality diagonal approximation is presented. Another approach to connect with the Hessian is through the Fisher information matrix. The KFAC method in \citep{martens2015optimizing} employs a Kronecker-factored approximation to the Fisher information matrix for the efficient calculation of matrix inverses involved in the natural gradient direction. This approach has been extended to support a broader range of neural networks, including convolutional neural network \citep{grosse2016kronecker} and recurrent neural network \citep{martens2018kronecker}. Since the classical Fisher information matrix is defined based on the vectorized formulation of the weight matrix, the inverses in Kronecker-factor approximation could still be computationally costly. To address this issue, the concept of a generalized Fisher information matrix, based on the matrix format (i.e., natural format) of the weight matrices, is introduced in \citep{yang2022efficient}. Only a single smaller matrix inverse is needed in computing the associated natural gradient direction. All of these methods are designed for general purposes. The motivation behind this paper is to design an efficient second-order type algorithm that capitalizes on the inherent low-rank structure in parameter-efficient fine-tuning.

\subsection{Contributions}
We summarize our contributions as follows. 
\begin{enumerate}
    \item We leverage the inherent low-rank properties of weight matrices in fine-tuning models to introduce a novel approach for approximating Hessian information using a portable Fisher information matrix. Under certain conditions, we establish the equivalence of this Fisher information matrix with the Hessian matrix, highlighting its utility in capturing second-order information efficiently.

    \item We present an adaptive Fisher method, dubbed as AdaFish, the efficient fine-tuning of LoRA-based models. Combining the popular exponential moving average with accessible storage and computation, we construct an adaptive Fisher information matrix, which could serve as an efficient alternative to the second-order momentum in AdamW. Unlike conventional approaches, this Fisher information matrix is neither purely diagonal nor element-wise. The comparison of our AdaFish with existing methods is presented in Table \ref{tab:optimizers}.  
    
    \item The convergence and iteration/oracle complexity of AdaFish are established. Through empirical evaluations on both image classification and language processing tasks, we demonstrate that AdaFish not only achieves superior performance but also reduces the required number of epochs by half compared to AdamW, underscoring its efficiency and effectiveness in model fine-tuning. 
\end{enumerate}

\subsection{Notations}
For any matrix $U \in \R^{m\times n}$, we denote its Euclidean norm by $\|U\|$. For any two matrices $U$ and $V$ of $\R^{m\times n}$, we use $\odot$ to denote the Hadamard product, i.e., $(U \odot V)_{ij} = U_{ij}V_{ij}$. For two matrices $A \in \R^{m_1 \times n_1}, B \in \R^{m_2 \times n_2}$, we use $\otimes$ to denote the Kronecker product, i.e., 
 \[ W=A\otimes B =  \begin{pmatrix}
 	a_{1,1} B &\ldots &a_{1,n_1}B \\
 	\vdots & \ddots & \vdots \\
 	a_{m_1, 1} B & \ldots & a_{m_1, n_1} B.\\
 \end{pmatrix}\]
 Given a set $S$, $|S|$ represents the cardinality of $S$. For a differentiable function $f(U,V)$, the notations $\nabla_U f(U,V), \nabla_V f(U,V)$, and $\nabla f(U,V)$ denote the partial gradient with respect to $U$, the partial gradient with respect to $V$, and the gradient with respect to both $U$ and $V$, respectively. The Hessian of $f$ is represented by ${\rm Hess} f$. 

\begin{table*}[h]
\caption{Comparison of different optimizers and their associated first-order $\tilde{m}_t$ and second-order momentum $\tilde{v}_t$. Here, we use $g_t$ to denote the stochastic gradient used in the $t$-th iteration. The last column represents whether the second-order momentum is nondiagonal. The matrix $D^{(s)}$ in AdaHessian is diagonal, which is constructed by using the Hutchinson’s method to approximate Hessian diagonal. In Sophia, $\hat{g}$ is calculated based on a resampled loss function.}
\centering
\footnotesize
\setlength{\tabcolsep}{2pt}
\begin{tabular}{|l|c|c|c|}
\hline
Optimizer & $\tilde{m}_t$ & $\tilde{v}_t$ & Nondiagonal app. \\
\hline
SGD \citep{robbins1951stochastic} & $\beta_1 \tilde{m}_{t-1} + (1 - \beta_1)g_t$ & $1$ & \texttimes
\\ \hline
AdamW \citep{loshchilov2017decoupled} & $\frac{(1-\beta_1) \sum_{i=1}^t \beta_1^{t-i}g_i}{1-\beta_1^t}$ & $\sqrt{\frac{(1-\beta_2) \sum_{i=1}^t \beta_2^{t-i}g_i \odot g_i}{1-\beta_1^t}}$ & \texttimes \\ \hline
AdaHessian \citep{yao2021adahessian} & $\frac{(1-\beta_1) \sum_{i=1}^t \beta_1^{t-i}g_i}{1-\beta_1^t}$ & $\sqrt{\frac{(1-\beta_2) \sum_{i=1}^t \beta_2^{t-i}D^{(s)}_i D^{(s)}_i}{1-\beta_2^t}}$ & \texttimes \\ \hline
Sophia \citep{liu2023sophia} & $\beta_1 \tilde{m}_{t-1} + (1 - \beta_1)g_t$ & $\beta_2 \tilde{v}_{t-1} + (1-\beta_2) B_t \cdot \hat{g}_t \odot \hat{g}_t$ & \texttimes\\ \hline
\textbf{AdaFish (Ours)} & $\frac{(1-\beta_1) \sum_{i=1}^t \beta_1^{t-i}g_i}{1-\beta_1^t}$ & $\frac{(1-\beta_2) \sum_{i=1}^t \beta_2^{t-i}g_ig_i^\top}{1-\beta_2^t}$ & \checkmark \\
\hline
\end{tabular}
\label{tab:optimizers}
\end{table*}

\section{Preliminaries}
In this section, we will review the popular low-rank parameter-efficient fine-tuning methods and Fisher information matrix-based stochastic optimization methods.  
\subsection{LoRA}
The most popular low-rank parameter-efficient fine-tuning method is the low-rank adaption, abbreviated as LoRA  \citep{hu2021lora}. In neural networks, the dense layers perform matrix multiplication, and the weight matrices in these layers usually have a full rank. However, when adapting to a specific task, pre-trained language models have been shown to have a low intrinsic dimension, allowing them to learn efficiently even with a random projection to a smaller subspace. Building on this idea, they hypothesize that the updates to the weights during adaptation also have a low intrinsic rank. To constrain the update of a pre-trained weight matrix, the update is performed in a low-rank space, namely,
\[  W = W_0 + \Delta W = W_0 + U^\top V, \]
where $W \in \R^{d\times k}, U \in \R^{r \times d}, V \in \R^{r \times k}$, and the rank $r \ll \min(d,k)$. Due to the low rankness, the computation of matrix-vector multiplication is reduced from $\mathcal{O}(dk)$ of $\Delta W x$ to $\mathcal{O}(r(d+k))$ of $UV^\top x$. During training, $W_0$ is frozen and does not receive gradient updates, while $U$ and $V$ are updated. For the initialization, they use a random Gaussian initialization for $V$ and set $U$ as zero. In practice, $r$ is often set to $4$.

\subsection{Parameter-efficient fine-tuning based on tensor low-rank decompositions}
In LoRA, since the weight matrices are decomposed via a matrix wise format, it can only reduce the intra-weight redundancy. To alleviate the inter-weight redundancy, the tensor low-rank decomposition based methods are developed.

The tensorized representation of weight matrices facilitates the application of sophisticated tensor decomposition techniques, such as tensor train \citep{oseledets2011tensor} and tensor Tucker decompositions \citep{de2000multilinear}, to fine-tune weights in a compressed format. For instance, the Tucker decomposition represents a low-rank 3-order tensor $\Delta \mathcal{W}^{\rm F} \in \R^{(12 L) \times d \times d}$ using a core tensor $C^{\rm F} \in \R^ {r \times r \times r}$ and three factor matrices $P^{\rm F} \in \R^{r \times 12L}$,  $A^{\rm F} \in \R^{r\times d}$ and $B^{\rm F} \in \R^{r\times d}$, scaled by a scalar $s \in \R$, namely,
\be \label{eq:Tucker} \Delta \mathcal{W}^{\rm F} = s \cdot C^{\rm F} \times_1 (P^{\rm F})^\top  \times_2 (A^{\rm F})^\top \times_3 (B^{\rm F})^\top,  \ee
where $\times_i$ is mode-$i$ product, i.e.,
\[ \begin{aligned}
	\Delta \mathcal{W}_{i,j,k}^{\rm F} & = s \cdot \sum_{t_1 =1}^r \sum_{t_2=1}^r \sum_{t_3 = 1}^r C_{t_1,t_2,t_3}^{\rm F} P_{t_1, i}^{\rm F} A_{t_2,j}^{\rm F} B_{t_3,k}^{\rm F}, \\
	\forall & 1\leq i \leq 12L, 1\leq j,k \leq d. 
\end{aligned}\]
Then, the fine-tuning is performed by training the factors, namely,
\be \label{eq:fact}  \mathcal{W} = \mathcal{W}_0 + \Delta \mathcal{W}^{\rm F}, \ee
where $\mathcal{W}_0$ is the pretrained parameters and frozen. The corresponding total number of tuning parameters is reduced to $r^3 + (12L + 2d)r$. When $r \ll \min\{L, d\}$,  the order is $\mathcal{O}(\max\{d,L\} r)$ instead of $\mathcal{O}(Ldr)$ in LoRA. Such reduction leads to better scalability of tensor-based decompositions.

More recently, the tensor CP decomoposition is also investigated in \citep{wang2023parameter} to fine tune the large-scale multimodal foundation model. The reparameterization on the weight update is given by 
\be \label{eq:CP}  \Delta \mathcal{W}^{\rm C} = \sum_{t=1}^r \lambda_t P^{\rm C}_t \circ A^{\rm C}_t \circ B^{\rm C}_t, \ee
where $\lambda \in \R^r, P^{\rm C} \in \R^{r \times 12L}$,  $A^{\rm C} \in \R^{r \times d}$, and $B^{\rm C} \in \R^{r \times d}$, and the corresponding elementwise formula is 
\[ \Delta \mathcal{W}_{i,j,k}^{\rm C} = \sum_{t=1}^r \lambda_{t} P^{\rm C}_{i,t} 
 A^{\rm C}_{j,t} B^{\rm C}_{k,t}. \]
The size of the core tensor is diagonal and its diagonal elements are represented by the vector $\lambda$. The total number of tuning parameters is $(12L + 2d +1)r$. Compared with Tucker decomposition, CP decomposition further reduces the number of parameters in the core tensor.

\subsection{Fisher information matrix based stochastic optimization methods} \label{sec:fisher-pre}
In the realm of probabilistic models, the Fisher information matrix was introduced by \citep{amari1985differential} as a Riemannian metric on the parameter space. This metric defines the natural gradient direction, characterizing the steepest descent in distribution space. 
Natural gradient-based stochastic methods have attracted significant attention in deep learning, as seen in \citep{martens2015optimizing,grosse2016kronecker,martens2018kronecker,martens2020new,goldfarb2020practical}. The efficiency of those methods relies on the fast inverse process of the Fisher information matrix or its approximation. Consider the following optimization problem with loss in the form of negative log-probability
\be \label{prob:fim} \min_{U \in \R^{m\times n}} \;\; f(U):=-\frac{1}{|S|}\sum_{(x,y) \in S}\sum \log p(y|x, U), \ee
where $x$ is the input, $y$ is the label, $S$ denotes the dataset, and $U$ represents the parameters. Note that many popular loss functions \citep{amari1985differential,martens2020new}, including the mean squared error and the cross entropy, yield the negative log-probability form. Denote by $u \in \R^{mn}$ the vectorization of $U \in \R^{m\times n}$, i.e., $u = {\rm vec}(U)$. The Fisher information matrix of \eqref{prob:fim} is given by 
\[ F(u) = \mathbb{E}_{P_x} \left[ \mathbb{E}_{P_{y|x,u}}\left[ \nabla \log p(y|x,u) \nabla \log p(y|x,u)^\top \right ] \right],  \]
which is an $mn$-by-$mn$ matrix. Since the expectations over $P_{x}, P_{y|x,u}$ may not have closed-form expressions, the following empirical Fisher information is constructed:
\[ \bar{F}(u) = \frac{1}{|S|} \sum_{(x,y) \in S} \left[ \nabla \log p(y|x,u) \nabla \log p(y|x,u)^\top \right]. \]
Note that the empirical Fisher information can be defined without requiring the loss function to be of negative probability form. 

After having the Fisher information matrix, the natural gradient direction 
\[ F^{-1}(u) \nabla f(u)\]
or its stochastic approximation is shown to be a better descent direction. The inverse of $F(u)$ or $\bar{F}(u)$ will be computationally costly if the total number of parameters, i.e., $mn$, is large. To address this, the authors \citep{martens2015optimizing} investigate the Kronecker-factored approximation of $F(u)$ and reduce the computations to the inverses of two smaller matrices of $m \times m$ and $n \times n$. The corresponding computational cost is still much higher than the stochastic gradient-type methods. The approaches designed therein include lazy updates on the Kronecker-factored Fisher information matrix and block diagonalization.

\section{An efficient second-order algorithm for fast training}
Consider a neural network model with pretrained weight $W_0 \in \R^{n\times k}$. Let $\mathcal{D}$ be the dataset and $f$ be the loss function of the downstream task. When LoRA is utilized for the parameter-efficient fine-tuning, the corresponding optimization problem is
\be \label{prob:LoRA} \min_{U \in \R^{r\times n}, V \in \R^{r \times k}} \quad \mathbb{E}_{\xi \sim \mathcal{D}}[f(W_0 + U^\top V; \xi)]. \ee
Note that where $r$ is much smaller than $\min\{n,k\}$ owing to the setting of low-rank fine-tuning. 

\subsection{Generalized Fisher information matrix}
Assume that the loss function $f$ is of log-probability form. As introduced in subsection \ref{sec:fisher-pre}, the dimension of the classic Fisher information matrix for $U$ ($V$) is ${nr\times nr}$ (${kr\times kr}$). Due to the large dimensionality, computing the corresponding natural gradient direction becomes computationally expensive, even when employing the Kronecker-factored approximation, which still requires the inversion of an $n \times n$ (or $k \times k$) matrix alongside an $r \times r$ matrix.  

We now introduce the generalized Fisher information matrix of \eqref{prob:LoRA}. By using the matrix calculus, we have the gradients of $f$ with respect to $U$ and $V$
\[ \begin{aligned}
    \nabla_U f(W_0 + UV^\top; \xi) & = V \nabla f(W_0 + UV^\top; \xi)^\top \in \R^{r\times n}, \\
    \nabla_V f(W_0 + UV^\top; \xi) & = U \nabla f(W_0 + UV^\top; \xi) \in \R^{r \times k}. \\
\end{aligned} \]
Building upon \citep{yang2022efficient} and considering $U$ and $V$ separately, we define the following generalized Fisher information matrices for $U$
\[ \begin{aligned}
    F^L(U) = \mathbb{E}_{\xi} & \left[ \nabla_U f(W_0+UV^\top; \xi) \right. \\
    & \left. \nabla_U f(W_0 + UV^\top; \xi)^\top \right] \in \R^{r\times r}, \\ 
    F^R(U) = \mathbb{E}_{\xi} & \left[ \nabla_U f(W_0+UV^\top; \xi)^\top \right. \\
    & \left. \nabla_U f(W_0 + UV^\top; \xi) \right] \in \R^{n \times n}.
\end{aligned}
\]
We call $F^L(U)$ and $F^R(U)$ the left and right generalized Fisher information matrices, respectively. 
Note that while $F^R(U)$ possesses a larger dimensional space compared to $F^L(U)$, it exhibits a low-rank characteristic, with its rank not exceeding $r$. Analogously, one can define a generalized Fisher information matrix for the variable $V$. It is worth noting that such low-rankness and small scale of the generalized Fisher information are unique to the low-rank parameter-efficient fine-tuning setting. 

\subsection{Connection with Hessian matrix}
Let ${\theta}$ denote the parameter $U$ or $V$. Assume that the loss function is of negative log-probability form, i.e., for some probability $p$, 
\[ f(\theta; x,y) :=-\log p(y|x,\theta). \] We then have the following connections among the matrix Fisher, the classic Fisher and the Hessian matrix. 
\begin{lemma} \label{lem:equiv}
    Assume that each column of the sample gradient $g = g(\theta;x,y) \in \mathbb{R}^{r \times n}$ is independent and identically distributed random vector with zero mean under the distribution $p(y \mid x, \theta)$ for any $\theta$. We have:
    \be \label{eq:mat-vec-Fisher}
    \mathbb{E}_{\xi}\left[\operatorname{vec}(g) \operatorname{vec}(g)^{\top}\right]=\mathbb{E}_{\xi}\left[I_n \otimes\left(\frac{1}{n} g g^{\top}\right)\right]. 
    \ee
    In addition, it holds:
    \be \label{eq:matFish-Hess}
    \mathbb{E}_{\xi}\left[I_n \otimes \frac{1}{n} g g^{\top} \right] = {\rm Hess} \mathbb{E}_{\xi} [f({\rm vec}(\theta))].   \ee
\end{lemma}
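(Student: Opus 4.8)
The plan is to establish the two displayed identities separately: the first \eqref{eq:mat-vec-Fisher} is a purely structural computation that exploits the i.i.d.\ zero-mean hypothesis on the columns of $g$, while the second \eqref{eq:matFish-Hess} reduces, after rewriting the left-hand side, to the classical information matrix equality for a log-probability loss.

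For \eqref{eq:mat-vec-Fisher}, I would write $g = [g_1, \dots, g_n]$ with columns $g_i \in \R^{r}$, so that $\operatorname{vec}(g) = (g_1^\top, \dots, g_n^\top)^\top$ and the $rn \times rn$ matrix $\operatorname{vec}(g)\operatorname{vec}(g)^\top$ decomposes into $n \times n$ blocks of size $r \times r$ whose $(i,j)$-th block is $g_i g_j^\top$. Taking expectations block by block, the hypothesis that the columns are i.i.d.\ with zero mean makes every off-diagonal block vanish, $\mathbb{E}[g_i g_j^\top] = \mathbb{E}[g_i]\mathbb{E}[g_j]^\top = 0$ for $i \neq j$, while each diagonal block equals the common covariance $\Sigma := \mathbb{E}[g_i g_i^\top]$; hence the left-hand side equals $I_n \otimes \Sigma$. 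On the right-hand side, since $g g^\top = \sum_{i=1}^n g_i g_i^\top$, we get $\mathbb{E}[\tfrac{1}{n} g g^\top] = \tfrac{1}{n}\sum_{i=1}^n \mathbb{E}[g_i g_i^\top] = \Sigma$, so $\mathbb{E}[I_n \otimes \tfrac{1}{n} g g^\top] = I_n \otimes \Sigma$, and the two sides coincide.

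For \eqref{eq:matFish-Hess}, I would first use \eqref{eq:mat-vec-Fisher} to replace the left-hand side by $\mathbb{E}[\operatorname{vec}(g)\operatorname{vec}(g)^\top]$, and observe that $\operatorname{vec}(g) = \nabla_{\operatorname{vec}(\theta)} f$ is exactly the score of the model, so this is the standard Fisher information matrix in the vectorized parameter. The claim then becomes the information matrix equality, which I would derive by differentiating the normalization $\int p(y \mid x, \theta)\,\drm y = 1$ twice with respect to $\operatorname{vec}(\theta)$: the first differentiation, using $\nabla p = p\,\nabla \log p$, gives the zero-mean score relation $\mathbb{E}[\nabla \log p] = 0$, and the second gives $\mathbb{E}[\nabla \log p\,(\nabla \log p)^\top] = -\mathbb{E}[{\rm Hess}\,\log p]$. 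Since $f = -\log p$, the left-hand side is $\mathbb{E}[\operatorname{vec}(g)\operatorname{vec}(g)^\top]$ and the right-hand side is $\mathbb{E}[{\rm Hess}\, f]$; interchanging expectation and differentiation then yields $\mathbb{E}[{\rm Hess}\, f] = {\rm Hess}\,\mathbb{E}[f(\operatorname{vec}(\theta))]$, which is the asserted identity.

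The main obstacle is analytic rather than algebraic: every step in the second part hinges on interchanging integration with differentiation, both in passing the $\operatorname{vec}(\theta)$-derivatives inside $\int p\,\drm y$ and in the closing equality $\mathbb{E}[{\rm Hess}\, f] = {\rm Hess}\,\mathbb{E}[f]$, which requires dominated-convergence-type regularity on $p(y \mid x, \theta)$. Equally essential is that the expectation be taken with respect to the model distribution $p(y \mid x, \theta)$ rather than the empirical data distribution, since the score is mean-zero only under the true model; this is precisely the content encoded by the zero-mean hypothesis on the columns of $g$, and I would make the consistency of that hypothesis with $\mathbb{E}[\nabla \log p] = 0$ explicit, together with the regularity conditions, before running the two differentiations.
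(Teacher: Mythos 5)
Your proposal is correct and follows essentially the same route as the paper, whose proof consists only of deferring \eqref{eq:mat-vec-Fisher} to \citep{yang2022efficient} and invoking the standard identity $\mathbb{E}_{\xi}[{\rm vec}(g)\,{\rm vec}(g)^{\top}] = {\rm Hess}\,\mathbb{E}_{\xi}[f({\rm vec}(\theta))]$ without derivation; your block decomposition of ${\rm vec}(g)\,{\rm vec}(g)^{\top}$ and your derivation of the information matrix equality are precisely the omitted details. Your explicit remark that the score is mean-zero only under the model distribution $p(y\mid x,\theta)$ (not the empirical one) is a point the paper glosses over and is worth keeping.
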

\begin{proof}
The equation \eqref{eq:mat-vec-Fisher} can be shown by following \citep{yang2022efficient}. By noting the fact that $ \mathbb{E}_{\xi}\left[{\rm vec}(g) {\rm vec}(g)^{\top}\right] = {\rm Hess \;} \mathbb{E}_{\xi} [f({\rm vec}(\theta))]$, 
we have \eqref{eq:matFish-Hess} holds. 
\end{proof}

The above lemma validates the use of the generalized Fisher information matrix through its connection to the Hessian matrix.

Given the computational challenge or impracticality of calculating the exact expectation $\mathbb{E}[\frac{1}{n} g g^\top ]$, we consider a computationally efficient empirical generalized Fisher matrix defined as:
\[ \bar{F}(\Theta) = I_n \otimes \frac{|\mathbf{B}|}{n} g(\mathbf{B}, \theta) g(\mathbf{B}, \theta)^\top, \]
where $g(\mathbf{B}, \theta):= -\frac{1}{|\mathbf{B}|} \sum_{(x,y) \in \mathbf{B}} \nabla \log p(y|x, \theta)$ denotes the mini-batch gradient over $\mathcal{B}$. 

\subsection{Natural gradient direction}

For the matrix of Kronecker product form, we have $(I_n \otimes A)^{-1} = I_n \otimes A^{-1}$ under the condition that $A$ is invertible. Moreover, for any matrix $D \in \R^{r \times n}$, it holds that
\[  \left[I_n \otimes A^{-1}\right] {\rm vec}(D) = {\rm vec}(A^{-1} D). \]
By the above formula and by substituting the exact expectation with its empirical court part, the corresponding natural gradient directions are defined as follows, where the generalized Fisher information matrix serves as a preconditioner:
\[ \begin{aligned}
    D^L & = (\bar{F}^L(U) + \lambda I)^{-1} g(U), \\ 
    D^R & = g(U) (\bar{F}^R(U) + \lambda I)^{-1}, 
\end{aligned}
\]
where $\bar{F}^L(U) := |\mathcal{D}| g(U) g(U)^\top$, \; $\bar{F}^R(U) := |\mathcal{D}| g(U)^\top g(U)$, $\lambda > 0$ is a small constant to ensure the positive definiteness of the matrices to be inverted, and $g(U)$ is the gradient of $\mathbb{E}_{\xi \sim \mathcal{D}}[f(W_0 + U^\top V; \xi)]$. It is obvious that $D^L$ can be obtained by inverting a small $r$-by-$r$ matrix. In terms of $D^R$, since $\bar{F}^R(U)$ is low-rank, we can apply the Sherman-Morrison-Woodbury formula to reformulate it as 
\[ \begin{aligned}
D^R(U) =& \lambda^{-1} g(U) - g(U)(I_r + g(U)^\top \\
& g(U) )^{-1} g(U)^\top g(U). \end{aligned} \]
Therefore, we only need to compute the inverse of a small $r$-by-$r$ matrix. In comparison to $D^R$, computing $D^L$ requires two additional matrix multiplications. Given the small scale of $r$, both directions can be computed efficiently and without much cost.

\subsection{Our algorithm: AdaFish}
It is known that the exponential moving average is a useful tool to construct momenta and boost the performances in stochastic optimization. Inspired by this, we focus on developing an adaptive Fisher information matrix for preconditioning. At $t$-th iteration, we first calculate a stochastic gradient $g_t$ from minibatch $\mathbf{B}_t$. Then, based on the properties of the Kronecker product, we have two generalized minibatch Fisher information matrices, 
\[ \bar{F}^L(\theta_t):= |\mathbf{B}_t| g_t g_t^\top \quad {\rm and} \quad \bar{F}^R(\theta_t):= |\mathbf{B}_t| g_t^\top g_t. \] 
Note that $\bar{F}^L(\theta_t)$ is an $r$-by-$r$ matrix, which can be cheaply stored. Thus, we utilize the exponential moving average on $\bar{F}^L(\theta_t)$ to construct an adaptive Fisher information matrix as
\[ h_t = \beta_2 h_{t-1} + (1-\beta_2) g_tg_t^\top, \]
where $h_{t-1}$ represents the adaptive Fisher information matrix used in the $(t-1)$-th iteration. Note that the constant $|\mathbf{B}_t|$ is ignored and we need an extra parameter to control the magnitude of $h_t$. 

By evoking the popular $\ell_2$-regularization weight decay and the construction of first-order momentum in AdamW \citep{loshchilov2018decoupled}, our AdaFish is presented in Algorithm \ref{alg:adafish}, where $\lambda, \gamma, \delta$ are parameters controlling the weight decay, the magnitude of $h_t$, and numerical stability. Compared to AdamW, the use of such a nondiagonal momentum  allows us capturing more Hessian information without introducing much additional computational cost. 
\begin{algorithm} \label{alg:adafish}
\caption{AdaFish for fast parameter-efficient fine-tuning}
\begin{algorithmic}
\REQUIRE {$\theta_0$, learning rate $\{\eta_t\}_{t=0}^{T}$, hyperparameters $\lambda, \gamma, \beta_1, \beta_2, \delta$.} 
Set $m_{-1} = 0$, $h_{-1} = 0$\;
\FOR{$t = 0$ to $T$}
    \STATE Compute $g_t = \nabla f_t(\theta_t)$\;
    \STATE $m_t = \beta_1 m_{t-1} + (1 - \beta_1)g_t$\;
    \STATE $h_t = \beta_2 h_{t-1} + (1-\beta_2)g_tg_t^\top$\;
    \STATE $\hat{m}_t = m_t/(1-\beta_1^t)$\;
    \STATE $\hat{h}_t = h_t/(1-\beta_2^t)$\;
    \STATE $\theta_t = \theta_t - \eta_t \lambda \theta_t$\;
    \STATE $\theta_{t+1} = \theta_t - \eta_t \cdot (\gamma \cdot \hat{h}_t + \delta)^{-1} \cdot \hat{m}_t$
\ENDFOR
\end{algorithmic}
\end{algorithm}

\subsection{Extension to low-rank tensor decomposition based fine-tuning}
Consider the low-rank tensor decomposition of the weight update $\Delta \mathcal{W} \in \R^{\ell\times n\times k}$, 
\[ \Delta \mathcal{W} = \mathcal{C} \times_2 U \times_3 V, \]
where $\mathcal{C} \in \R^{\ell \times r \times r}$, $U \in \R^{r \times n}$, and $V \in \R^{r\times k}$. The associated fine-tuning problem is:
\be \label{prob:tensor} \min_{\mathcal{C}, U, V} \quad \mathbb{E}_{\xi} [f(\mathcal{C}, U, V; \xi)]. \ee
Since $U$ and $V$ are in matrix format, we can define their corresponding generalized Fisher information matrix as in the LoRA setting. Regarding the tensor $\mathcal{C}$, one approach is to split it into $\ell$ matrices of size $r \times r$ and define the generalized Fisher information matrix correspondingly. Consequently, in each iteration, to form $g_tg_t^\top$, we need $(n+k + \ell r)r^2$ computation flops and $(\ell + 2) r^2$ storage. However, in LoRA, the total computation flops and storage are $\ell(n+k)r^2$ and $2\ell r^2$, respectively. 

\section{Convergence analysis}
To establish the convergence, we rely on the following assumptions on the loss function and the stochastics involved in Algorithm \ref{alg:adafish}.

\begin{assumption} \label{assum}
We assume that the loss function $f(\cdot)$ satisfies the following conditions.
\begin{enumerate}
    \item $f(\theta; \xi)$ is continuously differentiable and has a lower bound, i.e., there exists an $f^* > -\infty$ such that $f(\theta; \xi) \geq f^*$ for any $\theta$ and $\xi \in \mathcal{D}$. 
    
    \item The gradient $\nabla f(\theta; \xi)$ is $L_{f}$-Lipschitz continuous, i.e., $\| \nabla f\left(\theta_1, \xi \right)$ $-\nabla f\left(\theta_2, \xi \right)\left\| \leq L_{f}\right\| \theta_1-\theta_2 \|$ for any $\theta_1, \theta_2$ and $\xi \in \mathcal{D}$.
    
    \item The mini-batch gradient is unbiased, bounded, and with bounded variance, namely, there exist two positive constants $\sigma, D_g$, such that
    \be \label{eq:assum-grad} \left \{
    \begin{aligned}
    & \mathbb{E}[\nabla f_t(\theta_t)] = \nabla f(\theta_t), \\
    & \mathbb{E}[\| \nabla f_t(\theta_t) - \nabla f(\theta_t) \|] \leq \sigma, \\
    & \| \nabla f(\theta_t, \xi) \| \leq D_g, \;\; \forall \xi \in \mathcal{D}.   
    \end{aligned} \right. \ee
\end{enumerate}
\end{assumption}

These assumptions require that the loss function is sufficiently smooth and the stochastic gradient estimation is accurate enough. They are broadly used in the analysis of stochastic optimization methods, e.g., the stochastic gradient method and adaptive type methods. 

To establish the convergence, we define the dynamic function $f_t(\theta)$, which is a combination of the original loss function $f(\theta):= \mathbb{E}_{\xi} [f(\theta; \xi)]$ and a dynamic regularization $\frac{\lambda}{2}\| \theta \|_{v_t}^2$ with $v_t:= \gamma \hat{h}_t + \delta$ and $\| \theta \|_{v_t}:= \sqrt{ {\rm trace}(\theta^\top v_t \theta_t)}$, i.e., 
\[ \tilde{f}_t(\theta_t) = f(\theta) + \frac{\lambda}{2}\|\theta\|_{v_t}^2. \]
Based on this auxiliary function, we have the following convergence guarantee. 

\begin{theorem} \label{thm:global}
Suppose that Assumption \ref{assum} is satisfied and let $\{\theta_t\}$ be generated by Algorithm \ref{alg:adafish} with fixed step size, i.e., $\eta_t \equiv \eta$. By setting $\eta \leq \frac{\delta^{1.25} b \epsilon^2}{6\left(D_g^2+\delta\right)^{0.75} \sigma^2 L}$, $\beta_1 \geq 1- \frac{\delta^{0.5} b \epsilon^2}{3\left(D_g^2+\delta\right)^{0.5} \sigma^2}$ and $\beta_2 \in(0,1)$ for all iterations, after $T=\mathcal{O}\left(\max \left(\frac{D_g^{2.5} L \Delta \sigma^2}{\delta^{1.25} b \epsilon^4}, \frac{D_g^2 \sigma^4}{\delta b^2 \epsilon^4}\right)\right)$ iterations with $\Delta=f\left(\theta_0\right)-f^*$, we have
\be \label{eq:complexity}
\begin{aligned}
& \frac{1}{T} \sum_{t=0}^{T-1} \mathbb{E}\left[\left\|\nabla \tilde{f}_t\left(\theta_t\right)\right\|_2^2\right] \leq \epsilon^2, \\
& \frac{1}{T} \sum_{t=0}^{T-1} \mathbb{E}\left[\left\|\theta_t-\theta_{t+1}\right\|_{v_t}^2\right] \leq \frac{\eta^2 \epsilon^2}{4}. \\ 
\end{aligned}
\ee
Moreover, the total stochastic gradient complexity to achieve \eqref{eq:complexity} is $\mathcal{O}\left(\max \left(\frac{D_g^{2.5} L \Delta \sigma^2}{\delta^{1.25} \epsilon^4}, \frac{D_g^2 \sigma^4}{\delta b \epsilon^4}\right)\right)$.
\end{theorem}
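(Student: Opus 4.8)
The plan is to treat Algorithm~\ref{alg:adafish} as an inexact preconditioned gradient method on the dynamic objective $\tilde f_t$ and to run a descent-plus-telescoping argument measured in the time-varying $v_t$-norm. First I would record a two-sided spectral bound on the preconditioner. Since $\|g_i\|\le D_g$ gives $0\preceq g_ig_i^\top\preceq D_g^2 I$, the exponential average $h_t=(1-\beta_2)\sum_{i=1}^t\beta_2^{t-i}g_ig_i^\top$ satisfies $0\preceq h_t\preceq D_g^2(1-\beta_2^t) I$, so after bias correction $0\preceq\hat h_t\preceq D_g^2 I$ and hence
\[
\delta I \preceq v_t=\gamma\hat h_t+\delta \preceq (\gamma D_g^2+\delta) I .
\]
This yields the norm equivalence $\delta\|\theta\|^2\le\|\theta\|_{v_t}^2\le(\gamma D_g^2+\delta)\|\theta\|^2$, makes $v_t^{-1}$ well defined, and explains the appearance of $\delta$ and $(D_g^2+\delta)$ in the step-size and complexity bounds; every later estimate rests on it.

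Next I would establish the key descent inequality. Writing the combined weight-decay and update step as $\theta_{t+1}-\theta_t=-\eta\big(\lambda\theta_t+v_t^{-1}\hat m_t\big)=-\eta\, v_t^{-1}\big(\nabla\tilde f_t(\theta_t)+e_t\big)$ with $\nabla\tilde f_t(\theta_t)=\nabla f(\theta_t)+\lambda v_t\theta_t$ and $e_t:=\hat m_t-\nabla f(\theta_t)$, I would apply $L$-smoothness of $f$ and add the regularizer to pass to $\tilde f_t$. The cross term produces the negative contribution $-\eta\|\nabla\tilde f_t(\theta_t)\|_{v_t^{-1}}^2\le-\eta(\gamma D_g^2+\delta)^{-1}\|\nabla\tilde f_t(\theta_t)\|^2$ together with an inner product against $e_t$, and the quadratic remainder is controlled via $\|\theta_t-\theta_{t+1}\|^2\le\delta^{-1}\|\theta_t-\theta_{t+1}\|_{v_t}^2$. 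I would then decompose the per-step change of the objective as
\[
\tilde f_{t+1}(\theta_{t+1})-\tilde f_t(\theta_t)=\big(\tilde f_t(\theta_{t+1})-\tilde f_t(\theta_t)\big)+\tfrac{\gamma\lambda}{2}\,\mathrm{trace}\!\big(\theta_{t+1}^\top(\hat h_{t+1}-\hat h_t)\theta_{t+1}\big),
\]
where the last term is the drift from updating the preconditioner, bounded using boundedness of the iterates and $\|\hat h_{t+1}-\hat h_t\|\le C(1-\beta_2)D_g^2$.

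The crux is controlling $e_t$. Using $\hat m_t=\frac{(1-\beta_1)\sum_{i=1}^t\beta_1^{t-i}g_i}{1-\beta_1^t}$ I would split $e_t$ into a stochastic part $\frac{(1-\beta_1)\sum_i\beta_1^{t-i}(g_i-\nabla f(\theta_i))}{1-\beta_1^t}$ and a lag part $\frac{(1-\beta_1)\sum_i\beta_1^{t-i}(\nabla f(\theta_i)-\nabla f(\theta_t))}{1-\beta_1^t}$. For independent mean-zero batch noise the expected squared stochastic part is $O\big((1-\beta_1)\sigma^2/b\big)$, so it vanishes as $\beta_1\to1$; this is precisely why the statement demands $\beta_1\ge1-O(\epsilon^2)$. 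The lag part cannot be made pointwise small under heavy momentum, so instead of bounding it in isolation I would use $\|\nabla f(\theta_i)-\nabla f(\theta_t)\|\le L\sum_{j=i}^{t-1}\|\theta_j-\theta_{j+1}\|$ and sum over $t$ so that the aggregate lag error telescopes against $\sum_t\|\theta_t-\theta_{t+1}\|^2$; choosing $\eta=O(\epsilon^2)$ small relative to $L$ then lets the negative step-size term from the descent inequality dominate this contribution.

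Finally I would telescope from $t=0$ to $T-1$, using $f\ge f^*$ and the uniform bound on the regularizer to cap $\tilde f_0(\theta_0)-\tilde f_T(\theta_T)$ by $\Delta$ up to constants. Dividing by $T$ and absorbing the drift and momentum-error terms into the budget yields $\frac1T\sum\mathbb E\|\nabla\tilde f_t(\theta_t)\|_2^2\le\epsilon^2$; the second inequality then follows because $\|\theta_t-\theta_{t+1}\|_{v_t}^2=\eta^2\|\nabla\tilde f_t(\theta_t)+e_t\|_{v_t^{-1}}^2$ is controlled by the same two quantities. Setting each error term to a fixed fraction of $\epsilon^2$ pins down $T=\mathcal O(\max(\cdots))$, and multiplying by the batch size $b$ gives the stated oracle complexity. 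I expect the main obstacle to be the simultaneous balancing in the third step: heavy momentum $\beta_1\to1$ suppresses the variance but leaves an $O(1)$ pointwise lag, so the lag must be telescoped against the descent's step-size term, and this coupling with the time-varying preconditioner drift is what makes keeping every error at $O(\epsilon^2)$ while still yielding a finite $\epsilon^{-4}$ iteration count delicate.
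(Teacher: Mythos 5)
Your overall route---a descent inequality for the dynamic objective $\tilde f_t$ in the $v_t$-geometry, the spectral sandwich $\delta I\preceq v_t\preceq (D_g^2+\delta)I$, control of the momentum error $\hat m_t-\nabla f(\theta_t)$ by a variance part of size $O((1-\beta_1)\sigma^2/b)$ plus a gradient-lag part telescoped against $\sum_t\|\theta_t-\theta_{t+1}\|^2$, and a final telescoping with parameter balancing---is essentially the paper's proof. The paper packages the variance/lag split as a one-step recursion $\mathbb{E}\|m_t-\nabla f(\theta_t)\|^2\le\beta_1\mathbb{E}\|m_{t-1}-\nabla f(\theta_{t-1})\|^2+\frac{\beta_1^2L^2\eta^2}{(1-\beta_1)\delta^2}\mathbb{E}\|u_t\|^2+\frac{(1-\beta_1)^2\sigma^2}{b}$ folded into a Lyapunov function $G(\theta_t)=\tilde f_t(\theta_t)+\frac{\eta}{2\delta(1-\beta_1)}\mathbb{E}\|m_t-\nabla f(\theta_t)\|^2$, which is the unrolled version of your decomposition; that difference is cosmetic.

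There is, however, one genuine gap: your treatment of the preconditioner drift. You propose to write $\tilde f_{t+1}(\theta_{t+1})-\tilde f_t(\theta_{t+1})=\tfrac{\gamma\lambda}{2}\,\mathrm{trace}\bigl(\theta_{t+1}^\top(\hat h_{t+1}-\hat h_t)\theta_{t+1}\bigr)$ and bound it via ``boundedness of the iterates,'' but Assumption \ref{assum} bounds only gradients, not $\|\theta_t\|$, and $(1-\beta_2)$ is merely in $(0,1)$, so this additive drift term is not controllable and, summed over $T$ iterations, would destroy the telescoping. The paper avoids this by (i) proving the \emph{multiplicative} stability bound $\|v_{t+1}^{-1}v_t\|\in[1-\mu,1+\mu]$ with $\mu=\frac{4(1-\beta_2)D_g^2}{\delta}$, so that $\|\theta\|_{v_{t+1}}^2\le\frac{1}{1-\mu}\|\theta\|_{v_t}^2$, and (ii) replacing the fixed coefficient $\lambda/2$ in $\tilde f_t$ by the increasing geometric partial sums $\lambda_t=\frac{\lambda}{2}\sum_{i=1}^t\bigl(\frac{1-\mu}{2}\bigr)^i$, chosen precisely so that $\frac{\lambda_{t+1}}{1-\mu}\|\theta_{t+1}\|_{v_t}^2$ is absorbed by $\lambda_t\|\theta_t\|_{v_t}^2$ plus a cross term $\lambda\langle v_t\theta_t,\theta_{t+1}-\theta_t\rangle$ (which merges with the gradient inner product to form $\nabla\tilde f_t$) and a $\frac{\lambda}{2}\|\theta_{t+1}-\theta_t\|_{v_t}^2$ remainder. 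Without this device (or an explicit added assumption that the iterates stay bounded), your per-step drift estimate does not close, and this is the one nontrivial ingredient your sketch would need to supply.
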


The above theorem gives the $\mathcal{O}(\epsilon^{-4})$ complexity of the gradient norm and the difference between two consecutive iterations. It matches the result of AdamW given in \citep{zhou2022towards}. 

\begin{remark}
Theorem \ref{thm:global} assumes the use of constant step size. One can follow the analysis of AdamW in \citep{zhou2022towards} to establish similar complexity bounds.
\end{remark}

\begin{figure*}
    \centering
    \includegraphics[width=0.4\textwidth]{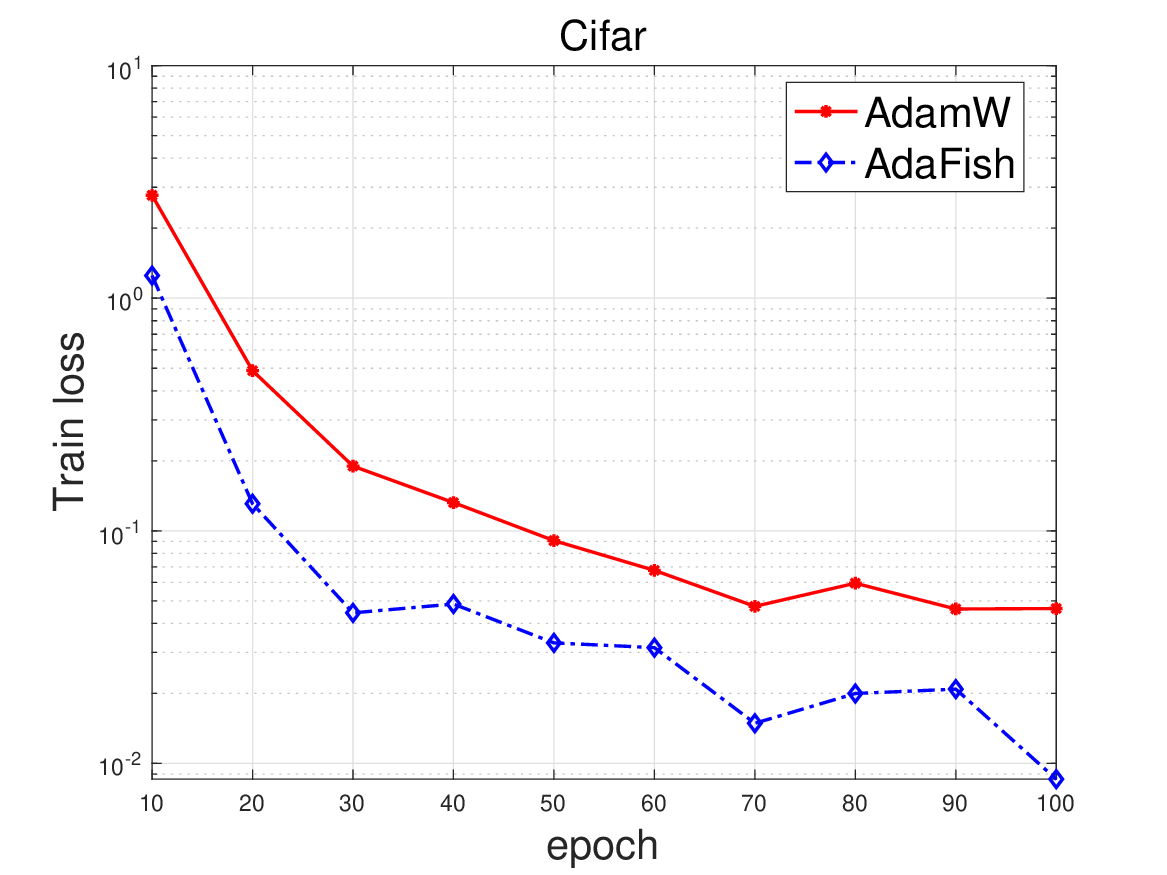}
    \includegraphics[width=0.4\textwidth]{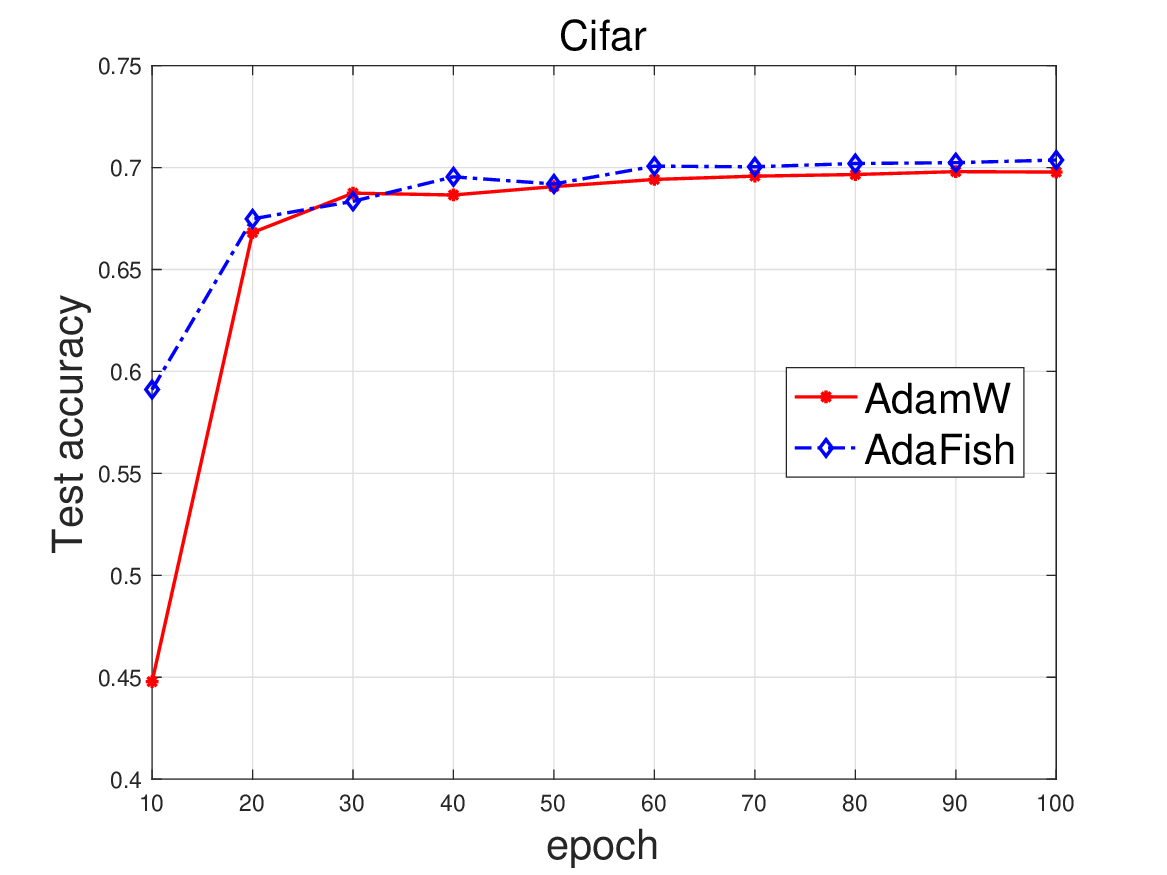}\\
    \vspace{0.2cm}
    \includegraphics[width=0.4\textwidth]{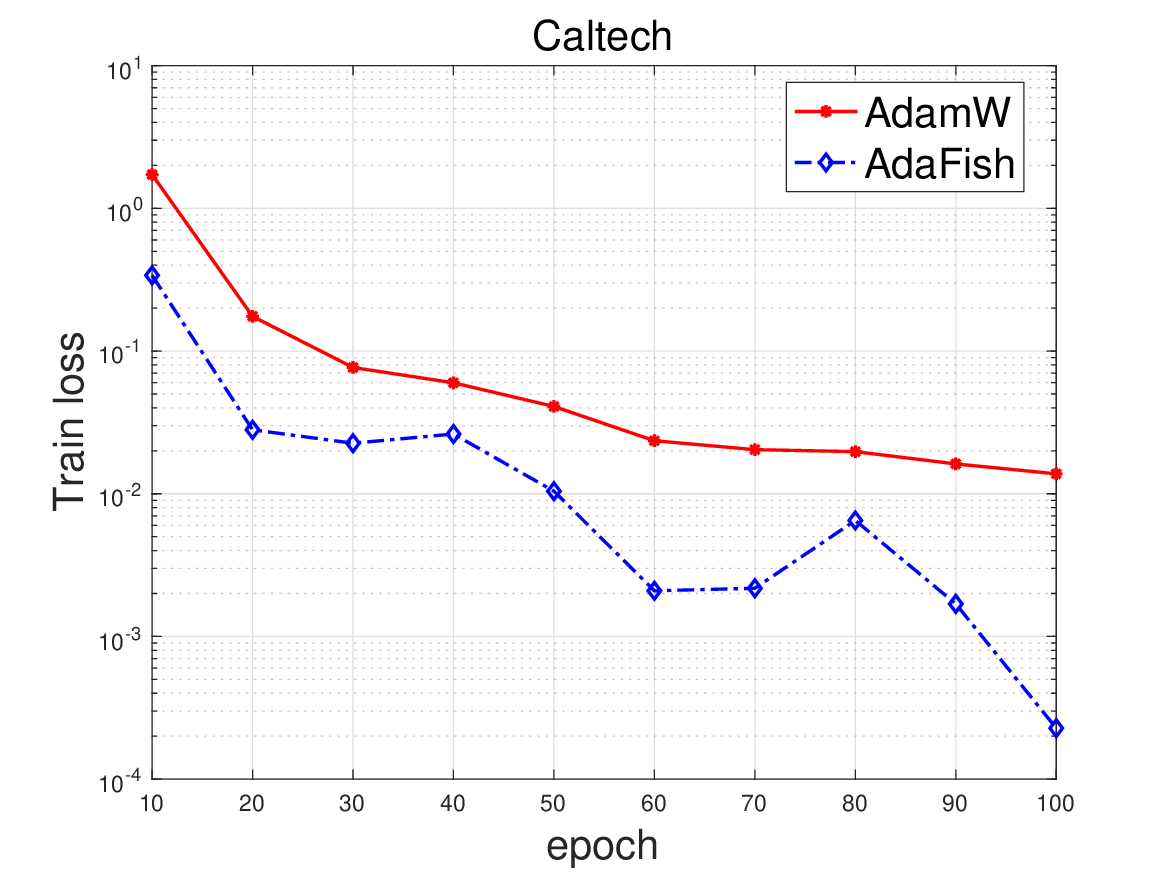}
    \includegraphics[width=0.4\textwidth]{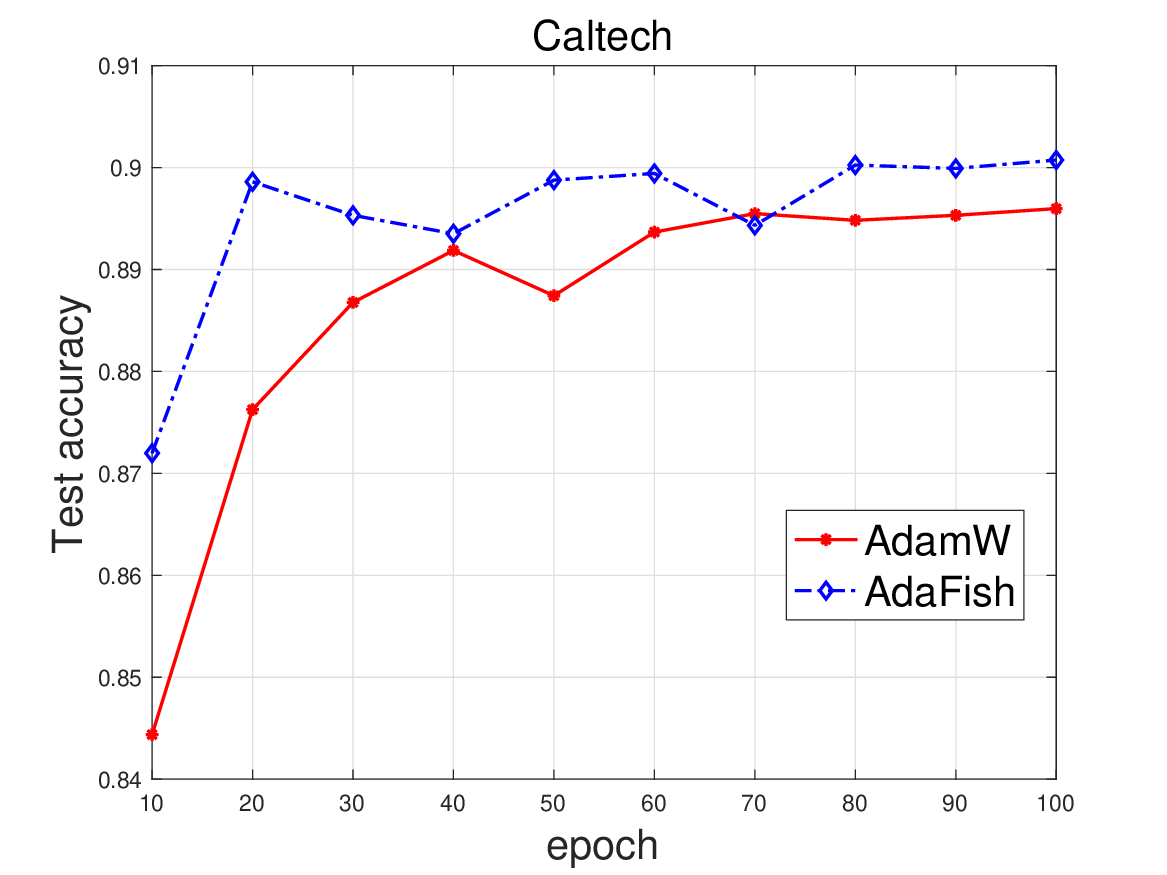}\\
    \vspace{0.2cm}
    \includegraphics[width=0.4\textwidth]{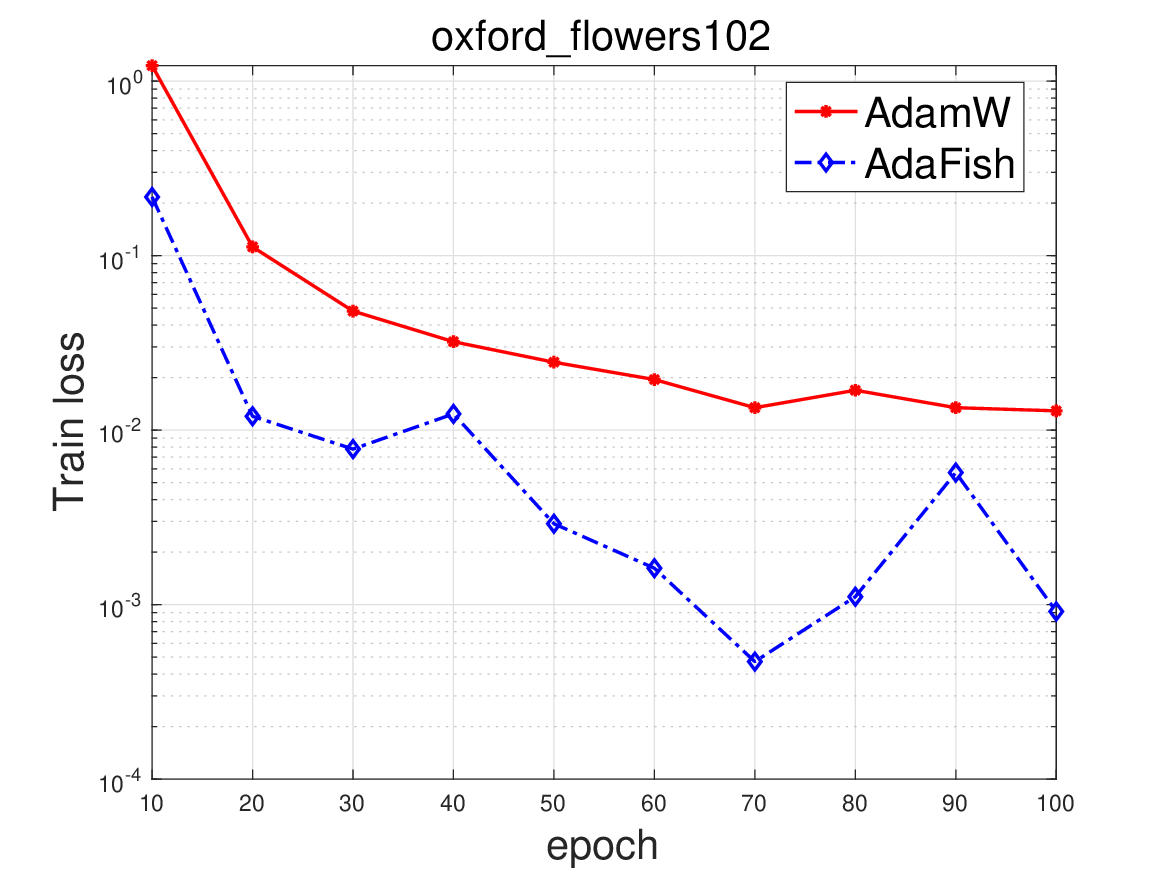}
    \includegraphics[width=0.4\textwidth]{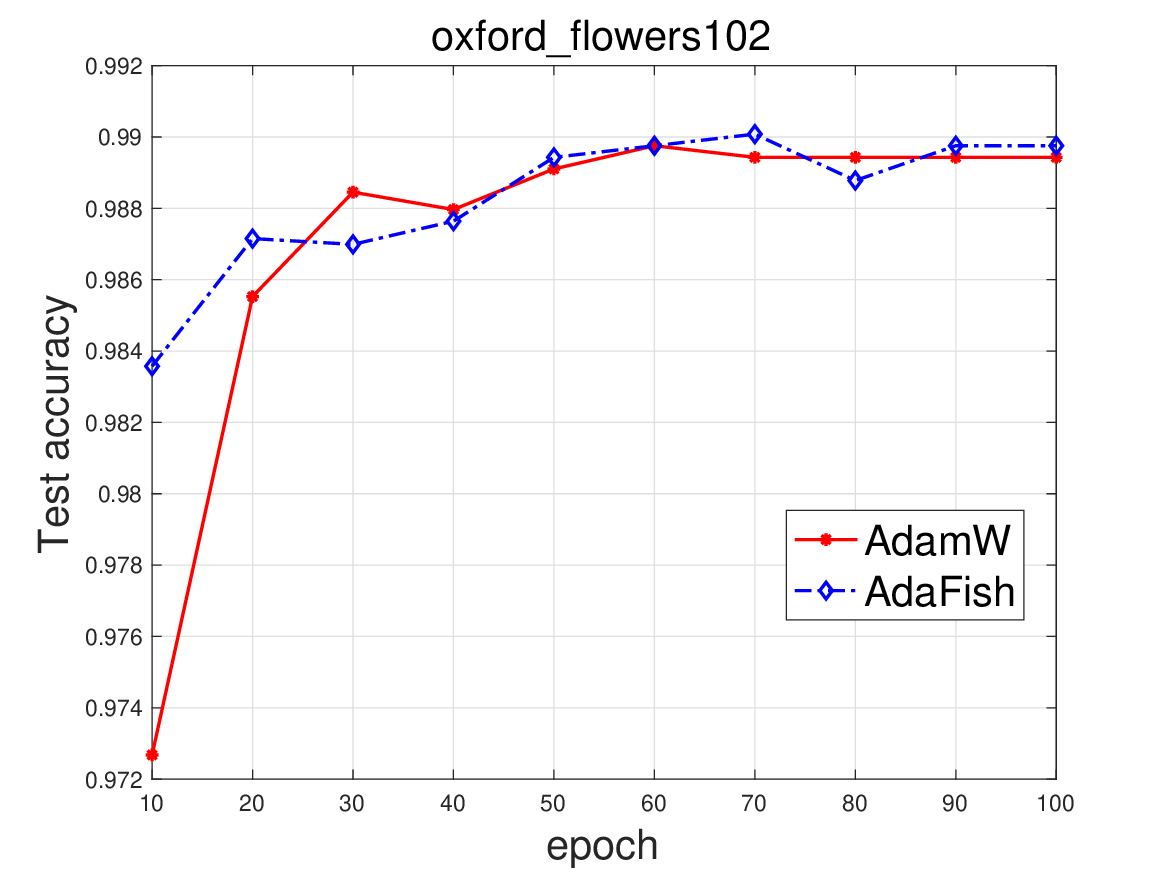} \\
    \vspace{0.2cm}
    \includegraphics[width=0.4\textwidth]{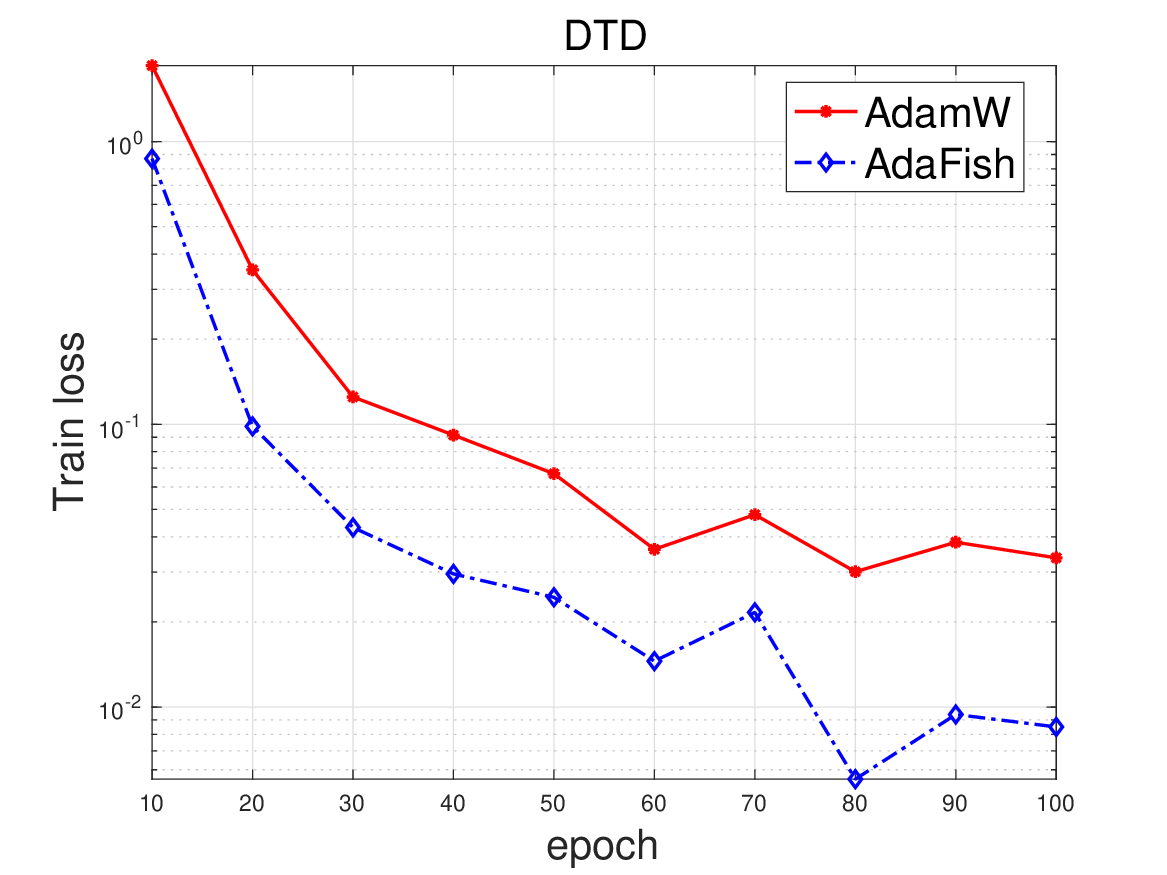}
    \includegraphics[width=0.4\textwidth]{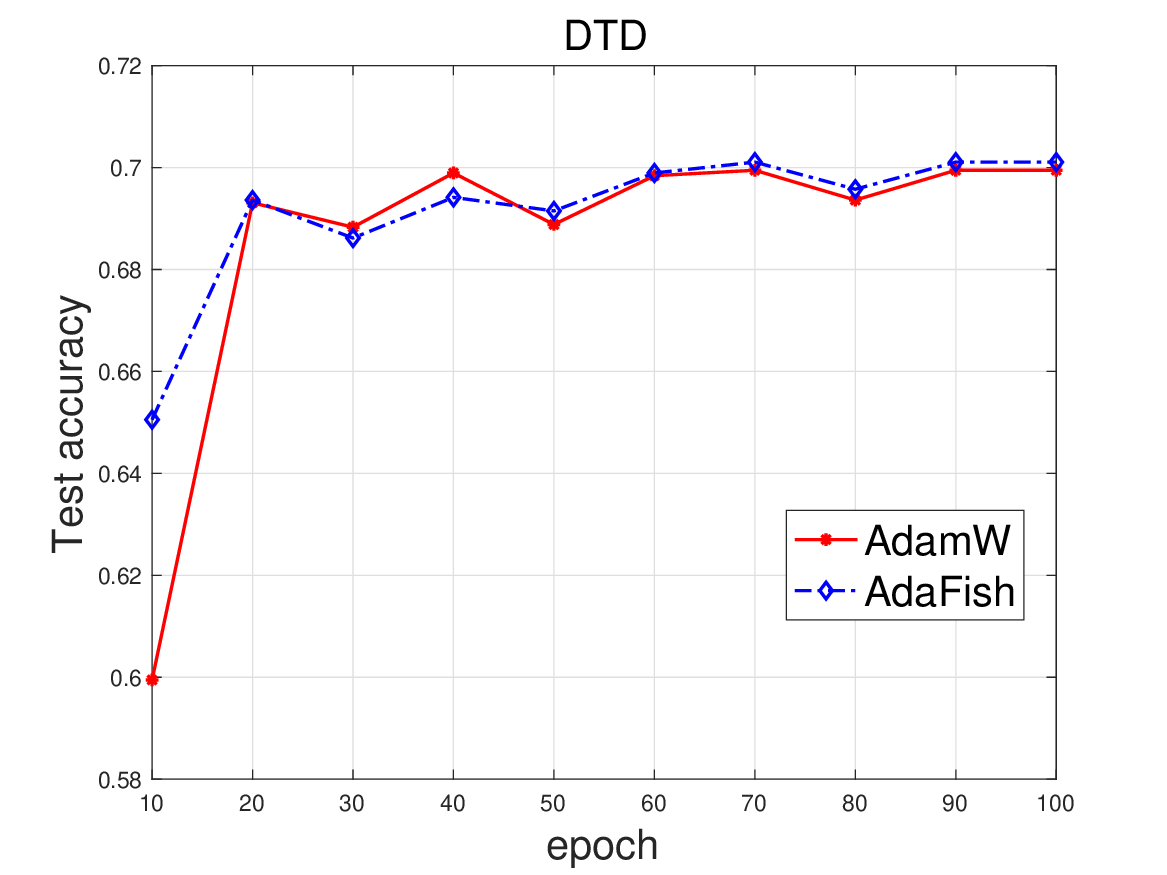}
    \caption{Training losses and testing accuracies on Cifar, Caltech, Oxford\_flowers, and DTD from Vtab-1k. AdaFish shows over 2x faster convergence in training losses and superior test accuracy, indicating enhanced generalization performance.}
    \label{fig:vtab-cifar}
\end{figure*}

\section{Numerical results}
In this section, we test two tasks: image classification and a task from natural language processing.
\subsection{Image classification}
\subsubsection{Dataset}
We use the VTAB-1K dataset \citep{zhai2019large}, which is a vision task adaptation benchmark dataset. It consists of 19 vision datasets and each of them has 1000 training examples. We choose five datasets for illustration: CIFAR, Caltech101, Oxford Flowers, DTD, and Resisc45. 
\subsubsection{Backbone model}
We employ the popular vision transformer \citep{dosovitskiy2020image}, ViT-B/16, as the backbone model. The pretrained weights are obtained by using the ImageNet 21k dataset. 

\subsubsection{Experimental results}
For the considered datasets, we employ LoRA with $r = 4$ for parameter-efficient fine-tuning of the backbone model using the pretrained weights. We compare our proposed AdaFish with the state-of-the-art AdamW method \citep{loshchilov2018decoupled}. For both methods,  we utilize a grid search to find the best parameters for the initial learning rate and weight decay. The initial learning rates for both algorithms are set to $10^{-1}$ and we use weight decay $10^{-1}$ and $10^{-4}$ for AdaFish and AdamW, respectively. The cosine learning rate scheduling \citep{loshchilov2016sgdr} is employed for both algorithms. The other parameters of AdamW are set to the default values. For our AdaFish, we set $\gamma = 2\times 10^{-4}$, $\beta_1 = 0.8$, $\beta_2 = 0.99$ and $\delta = 10^{-15}$. For all tests, we set the batch size to 32 and the maximum number of epochs to 100. 

We present the training losses and testing accuracies for all five datasets in Figures \ref{fig:vtab-cifar} and \ref{fig:vtab-dtd}. A significant performance enhancement with the AdaFish optimizer over AdamW is observed. Notably, AdaFish achieves superior training loss with only half the number of epochs (50 epochs), demonstrating a more efficient training process. Regarding testing accuracies, AdaFish consistently outperforms AdamW, particularly evident after the completion of 50 epochs. This suggests that AdaFish not only accelerates the training process but also enhances model generalization capabilities. Therefore, we see that the use of the adaptive Fisher information gives a better estimation of the second-order information and leads to fast convergence. 

\begin{figure*}[htbp]
    \centering
    \includegraphics[width=0.34\textwidth]{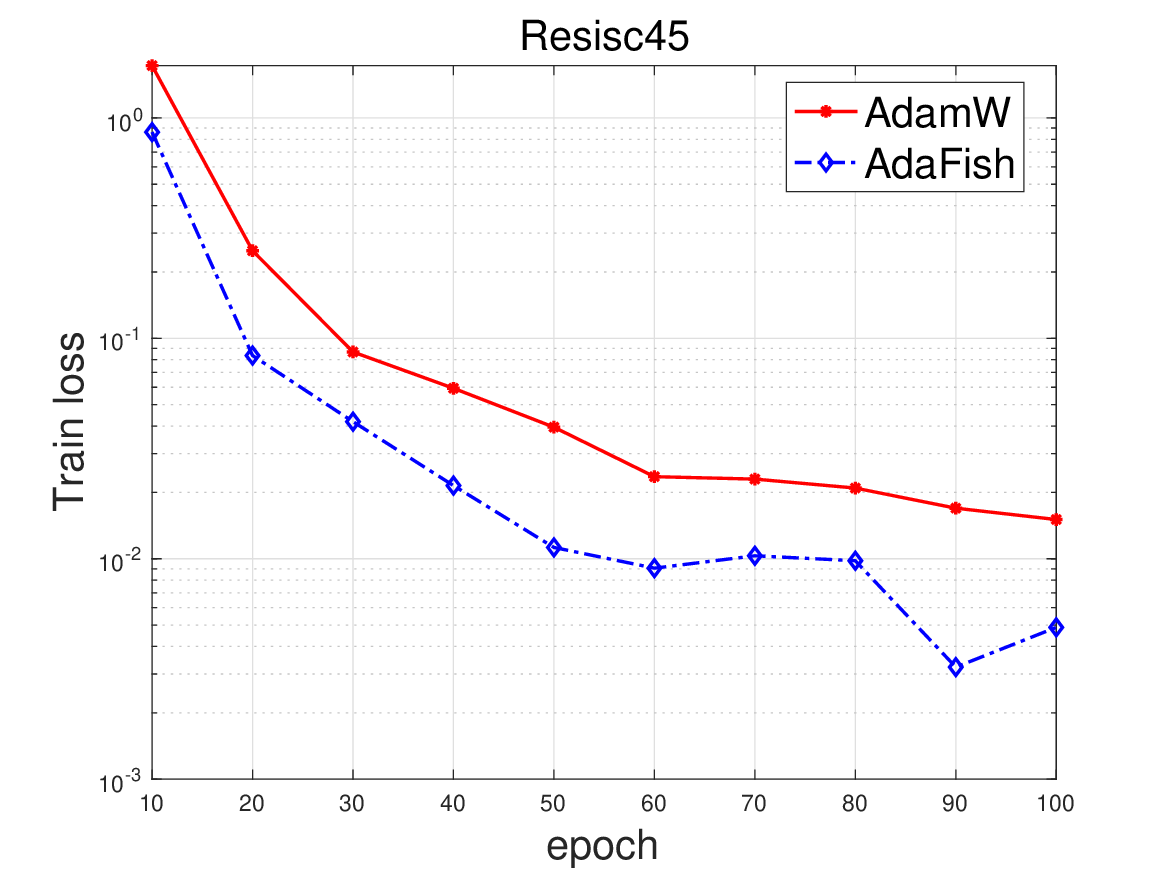}
    \includegraphics[width=0.34\textwidth]{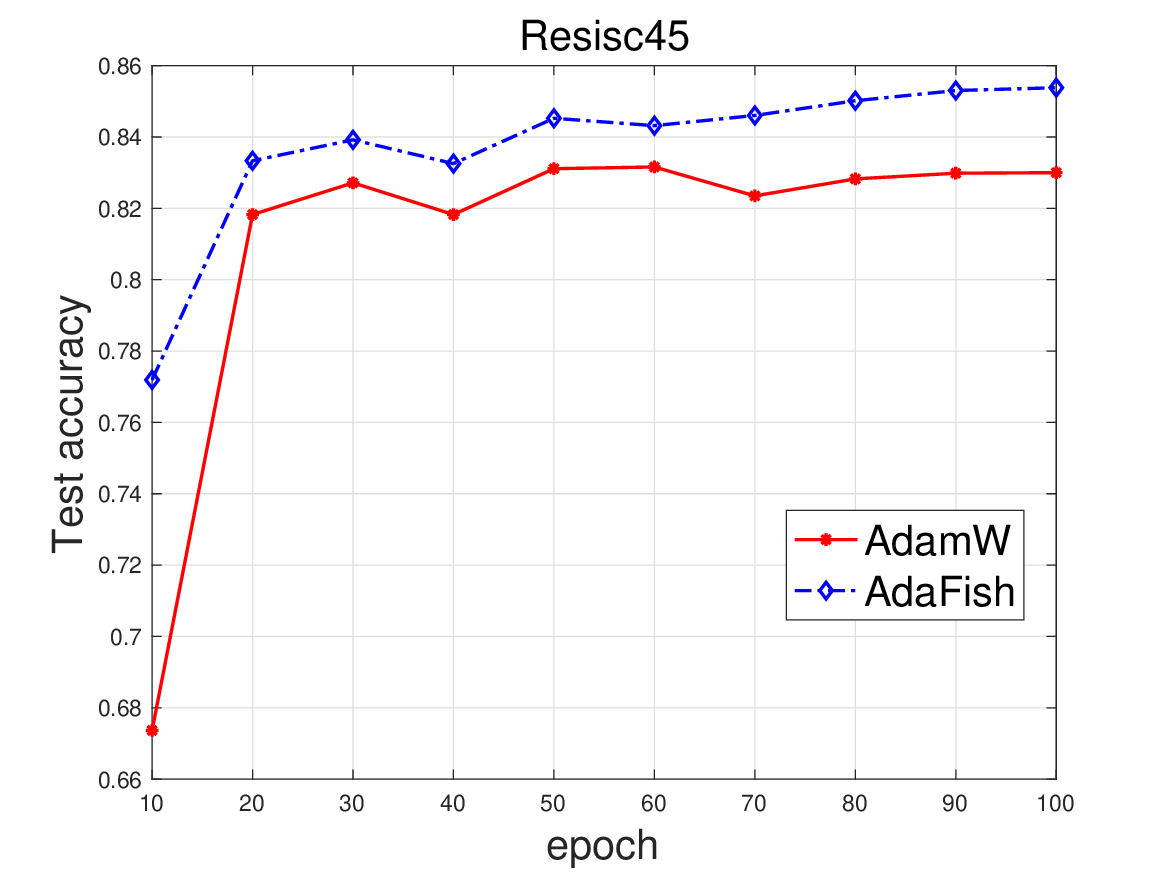}\\
    \caption{Training losses and testing accuracies on Resisc45 from Vtab-1k. AdaFish shows over 2x faster convergence in training losses and superior test accuracy, indicating enhanced generalization performance.}
    \label{fig:vtab-dtd}
\end{figure*}

\subsection{Natural language processing}
In addition to the fine-tuning task on the vision dataset presented above, we also examine our algorithm on a language processing task in the medical domain.
\subsubsection{Dataset}
We use the MIMIC-CXR Database \citep{johnson2019mimic}, which includes chest radiographs in DICOM format with free-text radiology reports.

\subsubsection{Backbone model}
We adopt the LoRA fine-tuning setting from the Litgpt framework \citep{lit-gpt-2023}, which includes state-of-the-art open-source large language models. 

\subsubsection{Experimental results}
The rank $r$ in LoRA is set to 8. For AdaFish and AdamW, we use the same hyperparameters as in the above subsection, except that the initial learning rate is set to $3 \times 10^{-4}$. The batch size is set to 128. 

To assess the quality of the generated summaries, we used the rouge scoring algorithm \citep{lin2004rouge}, which includes R-1, R-2, R-3, R-4, R-5, and R-L, are a series of metrics in natural language processing that evaluate text similarity: R-1 measures unigram overlap, R-2 through R-5 assess bigram to five-gram overlaps respectively for increasing textual detail, and R-L evaluates the longest common subsequence to gauge overall structural similarity. The inputs to our model are clinical findings extracted from radiology reports in the MIMIC dataset. The model then summarizes the findings and reports the most significant observations it identifies, along with possible causes for those findings.

The training losses and the rouge scores on the testing set are listed in Figure \ref{fig:nlp-loss} and Table \ref{tab:rouge_scores}, respectively. Regarding the training losses, compared with AdamW our AdamFish has quite similar performance in the early stage, then converges much faster in the middle stage, and return better loss values in the end. For the rouge scores, higher value indicates better performance. It can be seen that our AdaFish achieves significant improvements over AdamW across all scores. These promising results show the potential of AdaFish in natural language processing tasks. 

\begin{figure}
    \centering
    \includegraphics[width=0.4\textwidth]{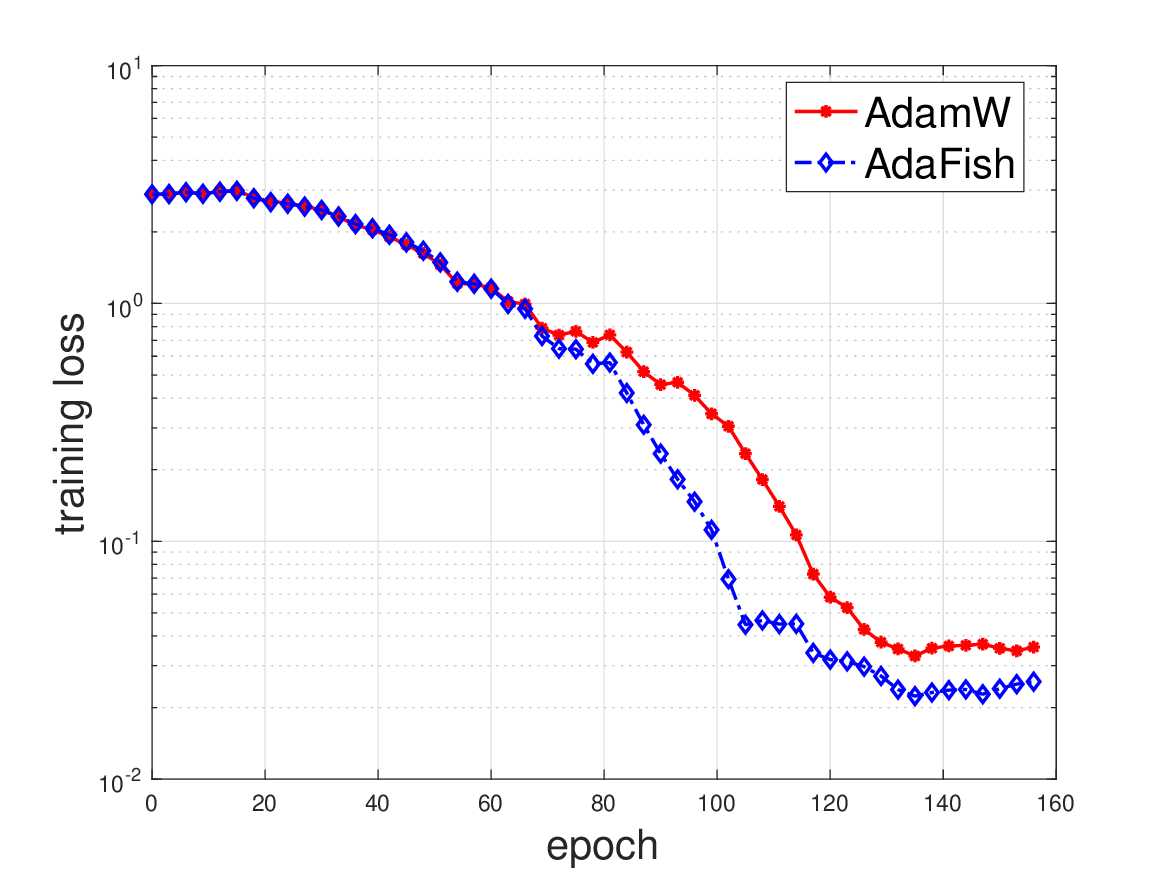}
    \caption{Testing rouge scores obtained by AdamW and AdaFish.}
    \label{fig:nlp-loss}
\end{figure}

\begin{table}[h]
\caption{Comparison of rouge scores for AdamW and AdaFish.}
\centering
\setlength{\tabcolsep}{2pt}
\begin{tabular}{|l|c|c|c|c|c|c|}
\hline
{Method} & {R-1} & {R-2} & {R-3} & {R-4} & {R-5} & {R-L} \\ \hline
AdamW           & 0.1652       & 0.1109       & 0.0807       & 0.0659       & 0.0563       & 0.1412       \\ \hline
AdaFish         & 0.1799       & 0.1193       & 0.0893       & 0.0736       & 0.0615       & 0.1582       \\ \hline
\end{tabular}
\label{tab:rouge_scores}
\end{table}

\section{Conclusion}
We introduce an adaptive Fisher method designed to expedite parameter-efficient fine-tuning within low-rank structures. A pivotal element of our algorithmic framework is the development of a portable Fisher information matrix, which is shown to be equivalent to the Hessian matrix under certain conditions. By leveraging the storage and computational efficiency of this approach, we employ the exponential moving average on the Fisher information matrix to derive an adaptive version. This adaptation allows our AdaFish method to capture more accurate second-order information. We establish global convergence along with iteration/oracle complexity. Our numerical experiments across image classification and language processing tasks underscore the efficacy of the proposed method. Notably, our algorithm holds potential for extension to a broader spectrum of low-rank-based fine-tuning frameworks, which we leave for future exploration.

\newpage
\nocite{langley00}
\newpage

\bibliography{ref}


\newpage
\appendix
\onecolumn

\section{Proof of Theorem \ref{thm:global}}
\begin{proof}
    Due to the similarity between our AdaFish and AdamW, our analysis follows from the argument in \citep{zhou2022towards}. For brevity, we let $v_t= \gamma h_t/(1-\beta_2^t) + \delta I$. Since we have $\left\|g_t \right\|_{\infty} \leq D_g$ and $\delta \leq \| v_t \| \leq$ $D_g^2+\delta =: c_1$. 
    Also we define
    \be u_t:=m_t+\lambda v_t \theta_t, \quad \theta_{t+1}-\theta_t =-\eta  v_t^{-1}(m_t+\lambda v_t \theta_t) =-\eta v_t^{-1} u_t. 
    \ee
    Moreover, we also define $\widetilde{f}_t\left(\theta\right)$ as follows:
    $$
\widetilde{f}_t\left(\theta\right)=f(\theta)+\lambda_t\|\theta\|_{v_t}^2=\mathbb{E}_{\xi}[f(\theta; \xi)]+\lambda_k\|\theta\|_{{v}_t}^2,
$$
where $\lambda_k=\frac{\lambda}{2} \sum_{i=1}^k\left(\frac{1-\mu}{2}\right)^i(k>0)$ and $\lambda_0=0$ in which $\mu=\frac{4(1-\beta_2) D_g^2}{\delta}$.
Then by using the smoothness of $f(\theta; \xi)$, we can obtain
\be \label{eq:desc1}
\begin{aligned}
\widetilde{f}_{t+1}\left(\theta_{t+1}\right) \leq & f\left(\theta_t\right)+\left\langle\nabla f\left(\theta_t\right), \theta_{t+1}-\theta_t\right\rangle+\frac{L}{2}\left\|\theta_{t+1}-\theta_t\right\|^2+\lambda_{t+1}\left\|\theta_{t+1}\right\|_{v_{t+1}}^2 \\
\stackrel{(1)}{\leq} & f\left(\theta_t\right)+\left\langle\nabla f\left(\theta_t\right), \theta_{t+1}-\theta_t\right\rangle+\frac{L}{2}\left\|\theta_{t+1}-\theta_t\right\|^2+ \frac{\lambda_{t+1}}{1-\mu}\left\|\theta_{t+1}\right\|_{v_t}^2 \\
\stackrel{(2)}{\leq} &f\left(\theta_t\right)+\lambda_t\left\|\theta_t\right\|_{v_t}^2+\left\langle\nabla f\left(\theta_t\right)+\lambda v_t \theta_t, \theta_{t+1}-\theta_t \right\rangle+\frac{L}{2}\left\|\theta_{t+1}-\theta_t\right\|^2+\frac{\lambda}{2}\left\|\theta_{t+1}-\theta_t \right\|_{v_t}^2 \\
= & \widetilde{f}_t\left(\theta_t\right)-\eta\left\langle\nabla f\left(\theta_t\right)+\lambda v_t \theta_t, v_t^{-1} u_t\right\rangle+\frac{L \eta^2}{2}\left\|v_t^{-1} u_t\right\|^2+\frac{\lambda \eta^2}{2}\left\|v_t^{-1} u_t\right\|_{v_t}^2 \\
= & \widetilde{f}_t\left(\theta_t\right) + \frac{1}{2}\left\| \sqrt{\eta} v_t^{-
\frac{1}{2}} (\nabla f(\theta_t) + \lambda v_t \theta_t - u_t) \right\|^2 - \frac{1}{2} \| \sqrt{\eta} v_t^{-\frac{1}{2}} (\nabla f(\theta_t) + \lambda v_t \theta_t) \| \\
& - \frac{1}{2} \|\sqrt{\eta} v_t^{-\frac{1}{2}} u_t \|^2 +\frac{L \eta^2}{2}\left\|v_t^{-1} u_t\right\|^2+\frac{\lambda \eta^2}{2}\left\|v_t^{-1} u_t\right\|_{v_t}^2 \\
\leq & \widetilde{f}_t\left(\theta_t\right) + \frac{\eta}{2\delta} \| \nabla f(\theta_t) - m_t\|^2 - \frac{\eta}{2c_1}\| \nabla f(\theta_t) + \lambda v_t \theta_t \|^2 - \left[ \frac{\eta}{2c_1} - \frac{L\eta^2}{2\delta^2} - \frac{\lambda\eta^2}{2\delta} \right] \|u_t\|^2 \\
\leq & \widetilde{f}_t\left(\theta_t\right) + \frac{\eta}{2\delta} \| \nabla f(\theta_t) - m_t\|^2 - \frac{\eta}{2c_1}\| \nabla f(\theta_t) + \lambda v_t \theta_t \|^2 - \frac{\eta}{4c_1} \|u_t\|^2.
\end{aligned}
\ee
where the second inequality is from $\| v_{t+1}^{-1} v_t \| = \| (I + (1-\beta_2)(g_tg_t^\top - h_k) v_t^{-1})^{-1} \| \in [1-\mu, 1+ \mu]$ with $\mu:= \frac{4(1-\beta_2)D_g^2}{\delta}$, and the third inequalty holds because
\[ \begin{aligned}
    \frac{\lambda_{t+1}}{1-\mu}\| \theta_{t+1} \|_{v_t}^2 & = \frac{\lambda_{t+1}}{1-\mu} \left( \lambda_t \|\theta_t\|^2_{v_t} + 2 \iprod{\theta_{t+1} - \theta_t}{\theta_t} + \|\theta_{t+1} - \theta_t\|^2_{v_t} \right) \\
    & \leq \lambda_k \|\theta_t\|^2_{v_t} + \lambda \iprod{\theta_{t+1 - \theta_t}}{\theta_t} + \frac{\lambda}{2} \|\theta_{t+1} - \theta_t\|^2_{v_t},
\end{aligned}  \]
and (3) is from $\eta \leq \frac{\delta^2}{2c_1(L + \lambda \delta)}$ such that $\frac{\eta}{4c_1} \geq \frac{L\eta^2}{2\delta^2} + \frac{\lambda \eta^2}{2\delta}$. 
Note that
\be \label{eq:diff}
\begin{aligned}
& \mathbb{E}\left[\left\| m_t-\nabla f\left(\theta_t\right)\right\|^2\right] \\
\leq & \beta_1 \mathbb{E}\left[\left\|m_{t-1}-\nabla f\left(\theta_{t-1}\right)\right\|^2\right]+\frac{\beta_1^2 L^2}{ 1- \beta_1} \mathbb{E}\left[\left\|\theta_t-\theta_{t-1}\right\|^2\right]+\frac{ (1- \beta_1)^2 \sigma^2}{b} \\
\leq & \beta_1 \mathbb{E}\left[\left\|m_{t-1}-\nabla f\left(\theta_{t-1}\right)\right\|^2\right]+\frac{\beta_1^2 L^2 \eta^2}{(1 - \beta_1) \delta^2} \mathbb{E}\left[\left\| u_k\right\|^2\right]+\frac{(1 - {\beta}_1)^2 \sigma^2}{b}.
\end{aligned}
\ee
By adding \eqref{eq:desc1} and $\alpha \times$ \eqref{eq:diff} yields
\[ \begin{aligned}
    & \tilde{f}_{t+1}(\theta_{t+1}) + \alpha \mathbb{E}\left[\left\| m_{t+1}-\nabla f\left(\theta_{t+1}\right)\right\|^2\right] \\
    \leq & \widetilde{f}_t\left(\theta_t\right) - \frac{\eta}{2c_1}\| \nabla f(\theta_t) + \lambda v_t \theta_t \|^2  + \left[ \beta_1 \alpha + \frac{\eta}{2\delta} \right] \mathbb{E} \left[\| \nabla f(\theta_t) - m_t\|^2 \right] \\
    & - \left[ \frac{\eta}{4c_1} - \frac{\alpha \beta_1^2 L^2 \eta^2}{(1 - \beta_1) \delta^2} \right] \mathbb{E} \left[ \|u_t\|^2 \right] + \frac{\alpha(1 - {\beta}_1)^2 \sigma^2}{b}.
\end{aligned} \]
Let us denote $\alpha = \frac{\eta}{2\delta (1 -\beta_1)}$ and $G(\theta_{t+1}) = \tilde{f}_{t+1}(\theta_{t+1}) + \frac{\eta}{2\delta(1-\beta_1)} \mathbb{E} \left[\| \nabla f(\theta_t) - m_t\|^2 \right]$. Then,
\[  \begin{aligned}
    G(\theta_{t+1}) & \leq G(\theta_{t}) - \frac{\eta}{2c_1} \mathbb{E} \|\nabla f(\theta_t) + \lambda v_t \theta_t  \|^2 - \frac{\eta}{4c_1} \left[1 - \frac{2 c_1 \beta_1^2 L^2 \eta^2}{(1 - \beta_1)^2 \delta^3}  \right] \mathbb{E}\left[\|u_k\|^2\right] + \frac{\eta (1-\beta_1)\sigma^2  }{2\delta b} \\
    & \leq G(\theta_{t}) - - \frac{\eta}{2c_1} \mathbb{E} \|\nabla f(\theta_t) + \lambda v_t \theta_t  \|^2 - \frac{\eta}{8c_1} \mathbb{E}\left[\|u_k\|^2\right] + \frac{\eta (1-\beta_1)\sigma^2  }{2\delta b},
\end{aligned}
\]
where the second inequality holds due to $\eta \leq \frac{(1-\beta_1)c_1}{2\beta_1 L} \sqrt{\frac{\delta}{c_1}}$. 
Summing the above inequality from $t=0$ to $t=T-1$ leads to
\[ \begin{aligned}
    \frac{1}{T} \sum_{t=0}^{T-1} \mathbb{E} \left[\|\nabla f(\theta_t) + \lambda v_t \theta_t  \|^2 + \frac{1}{4} \|u_k\|^2 \right]  & \leq \frac{2c_1}{\eta T}[G(\theta_0) - G(\theta_T)] + \frac{c_1 (1-\beta_1)\sigma^2}{\delta b} \\
    & \leq \frac{2c_1\Delta}{\eta T} + \frac{c_2 \sigma^2}{c_1 (1-\beta_1) bT} + \frac{c_1 (1-\beta_1)\sigma^2}{\delta b} \\
    &\leq \epsilon^2,
\end{aligned}
 \]
 where we set $T \geq \max\left( \frac{6c_1\Delta}{\eta\epsilon^2}, \frac{3c_1 \sigma^2}{\delta (1-\beta_1)b \epsilon^2} \right)$ and $1- \beta_1 \leq \frac{\delta b\epsilon^2}{3c_1 \sigma^2}$, and 
 \[ 
 \begin{aligned}
& G\left(\boldsymbol{x}_0\right)-G\left(\boldsymbol{x}_T\right) \\
= & \widetilde{f}_0\left(\boldsymbol{x}_0\right)+\frac{\eta}{2 \delta (1 - \beta_1)} \mathbb{E}\left[\left\| m_0-\nabla f\left(\theta_0\right)\right\|^2\right]-\widetilde{f}_T\left(\theta_T\right)-\frac{\eta}{2 \delta (1-\beta_1)} \mathbb{E}\left[\left\|m_T-\nabla f\left(\theta_T\right)\right\|^2\right] \\
= & f\left(\theta_0\right)+\frac{\eta}{2 \delta (1-{\beta}_1)} \mathbb{E}\left[\left\|{m}_0-\nabla f\left(\theta_0\right)\right\|^2\right]-f\left(\theta_T\right)-\lambda_T\left\|\theta_T\right\|_{{v}_T}-\frac{\eta}{2 \delta (1-{\beta}_1)} \mathbb{E}\left[\left\|{m}_T-\nabla f\left(\theta_T\right)\right\|^2\right] \\
\leq & f\left(\theta_0\right)+\frac{\eta}{2 \delta (1 - {\beta}_1)} \mathbb{E}\left[\left\| m_0-\nabla f\left(\theta_0\right)\right\|^2\right]-f\left(\theta_T\right) \\
\leq & \Delta+\frac{\eta}{2 \delta (1 - \beta_1)} \mathbb{E}\left[\left\|m_0-\nabla f\left(\theta_0\right)\right\|^2\right] \\
\leq & \Delta+\frac{\eta \sigma^2}{2 \delta (1- \beta_1) b}. 
\end{aligned}
 \]
 This result directly bounds
 \[ \frac{1}{T}\sum_{t=0}^T \| v_t(x_t - x_{t+1}) \|^2 = \frac{\eta^2}{T} \sum_{t=0}^{T-1} \|m_k + \lambda v_t x_t\|^2 \leq \frac{\eta^2}{T} \sum_{t=0}^{T-1} \|u_t\|^2 \leq 4 \eta^2 \epsilon^2 \]
 and 
\[ \frac{1}{T}\sum_{t=0}^{T-1} \| \theta_t - \theta_{t+1} \|^2 \leq \frac{4 \eta^2 \epsilon^2}{\delta^2}.  \]
For all hyper-parameters, we put their constrains together:
$$
1- {\beta}_1 \leq \frac{c_1 b \epsilon^2}{3 c_1 \sigma^2}
$$
where $\delta \leq\left\| {v}_t\right\| \leq D_g^2+\delta =c_1=\mathcal{O}\left(D_g^{2}\right)$. For $\eta$, it should satisfy
$$
\eta \leq \frac{(1 - {\beta}_1) \delta}{2\beta_1 L} \sqrt{\frac{\delta}{c_1}} \leq \frac{\delta b \epsilon^2}{3 c_1 \sigma^2} \frac{\delta}{2 L} \sqrt{\frac{\delta}{c_1}}=\frac{\delta^2 b \epsilon^2}{6 c_1 \sigma^2 L} \sqrt{\frac{\delta}{c_1}},
$$
where $\delta$ is often much smaller than one, and $1 - \beta_1$ is very small. For $T$, we have
$$
\begin{aligned}
T & \geq \max \left(\frac{6 c_1 \Delta}{\eta \epsilon^2}, \frac{3 c_1 \sigma^2}{\delta (1-\beta_1) b \epsilon^2}\right)=\mathcal{O}\left(\max \left(\frac{6 c_1 \Delta}{\epsilon^2} \frac{6 c_1 \sigma^2 L}{\delta^2 b \epsilon^2} \sqrt{\frac{c_1}{\delta}}, \frac{3 c_1 \sigma^2}{\delta b \epsilon^2} \frac{3 c_1 \sigma^2}{\delta b \epsilon^2}\right)\right) \\
& =\mathcal{O}\left(\max \left(\frac{36 c_1^{2.5} \Delta \sigma^2 L}{\delta^{2.5} b \epsilon^4}, \frac{9 c_1^2 \sigma^4}{\delta^2 b^2 \epsilon^4}\right)\right)=\mathcal{O}\left(\max \left(\frac{36 D_g^{2.5} \Delta \sigma^2 L}{\delta^{1.25} b \epsilon^4}, \frac{9 D_g^2 \sigma^4}{\delta b^2 \epsilon^4}\right)\right) .
\end{aligned}
$$

Now we compute the stochastic gradient complexity. For $T$ iterations, the complexity is
$$
\mathcal{O}(T b)=\mathcal{O}\left(\max \left(\frac{36 c_1^{2.5} \Delta \sigma^2 L}{\delta^{2.5} \epsilon^4}, \frac{9 c_1^2 \sigma^4}{\delta^2 b \epsilon^4}\right)\right)=\mathcal{O}\left(\max \left(\frac{36 D_g^{2.5} \Delta \sigma^2 L}{\delta^{1.25} \epsilon^4}, \frac{9 D_g^2 \sigma^4}{\delta b \epsilon^4}\right)\right)
$$

The proof is completed.

\end{proof}
\end{document}